\newcommand{\ass}[2]{#1\rightarrow#2}
\newtheorem{thm}{Theorem}
\newtheorem{lem}{Lemma}
\newtheorem{defn}{Definition}
\title{Submodular Function Maximization for Group Elevator Scheduling}
\author{Srikumar Ramalingam\\
School of Computing, \\
University of Utah, USA\\
{\tt\small srikumar@cs.utah.com}\\
\And
{\Large Arvind U. Raghunathan \and Daniel Nikovski} \\
Mitsubishi Electric Research Labs (MERL), \\
Cambridge, MA, USA\\
{\tt\small \{raghunathan,nikovski\}@merl.com}
}
\begin{document}

\maketitle

\begin{abstract}
We propose a novel approach for group elevator scheduling by formulating it as 
the maximization of submodular function under a matroid constraint. In 
particular, we propose to model the total waiting time of passengers using a 
quadratic Boolean function. The unary and pairwise terms in the function denote 
the waiting time for single and pairwise allocation of passengers to elevators, respectively. We show that this objective function is submodular. The matroid constraints ensure that every passenger is allocated to exactly one elevator. 
We use a greedy algorithm to maximize the submodular objective function, and 
derive provable guarantees on the optimality of the solution. We tested our 
algorithm using Elevate 8, a commercial-grade elevator simulator that allows 
simulation with a wide range of elevator settings. We achieve significant 
improvement over the existing algorithms. 
\end{abstract}

\section{Introduction}
Group elevator scheduling refers to the problem of assigning passenger requests to specific elevators. We exemplify 
this using a specific example below. Consider a building with $F=8$ floors and $C=3$ elevator cars as shown in 
Figure~\ref{fg.intro_elev_schedule}. Passengers requesting elevator service press the call buttons on their 
respective floors to signal the direction of their rides.  We refer to these as \emph{hall calls}.  
For example, there are two hall calls on the floors 2 and 7 requesting service to higher and lower floors respectively. 
From the standpoint of the elevators, the direction of travel of the hall calls are known. However, the number 
of passengers behind a hall call and their destinations are typically not available in most elevator systems.  
The waiting passengers are unaware of the locations of individual cars, their moving directions, or the cars 
assigned to service them.  For example in Figure~\ref{fg.intro_elev_schedule}, the first elevator car $C1$ is at the 
floor $3$ and moving upward to service a passenger in the car going to floor $5$. This information is only available 
after the passenger enters the car and presses the particular floor button in the elevator car. We refer to such requests from passengers in the cars as \emph{car calls}. The elevator cars $C2$ and $C3$ are moving downwards and upwards to service their respective passenger requests. At each floor, we can have hall call buttons in upward and downward directions. In a building with $F$ floors we can have a maximum of $2F-2$ hall calls (there can be hall calls in only 
one direction at the lobby and the top floors) at any time. With $C$ elevator cars, we can have a maximum of 
$C^{2F-2}$ assignments of hall calls to elevators.

\begin{figure}[!t]\centering
\includegraphics[height=1.8in,width=0.50\columnwidth]{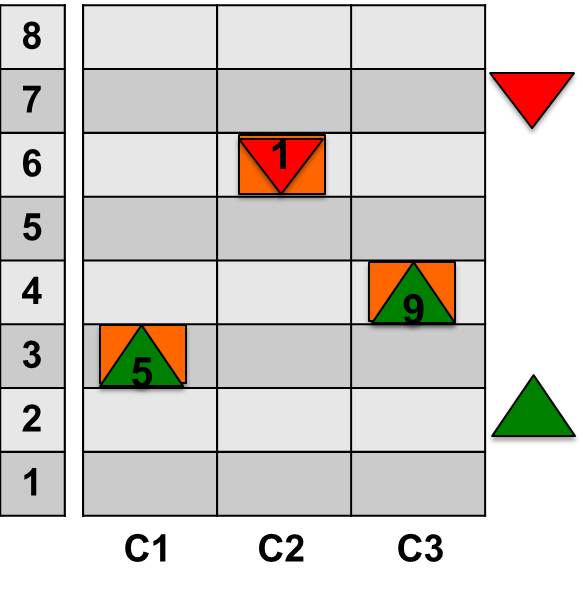}
\caption{\it A typical elevator system with 3 elevator cars and 8 floors. The three elevator cars $C1$, $C2$, 
and $C3$ are moving to serve the requests of the passengers that are in their cars. Passengers indicate the 
preference for their rides by pressing the upward or downward buttons. 
}
\label{fg.intro_elev_schedule}
\end{figure}

Given this incomplete information of the waiting passengers and the huge search space, the \emph{scheduler} in 
an elevator system has the responsibility of \emph{assigning} a particular elevator to service each of the hall calls. 
In general, the schedulers assign the hall calls to elevator cars to optimize some criteria such 
as the average waiting time, total travel time of the passengers, energy consumption, etc.  
In addition, the potential human-system interactions only serve to complicate the decision-making as even a 
gentle gesture of keeping the elevator door open for arriving passengers can render a particular assignment 
suboptimal.  As a result, the schedulers must constantly improvise and potentially change the assignments of 
hall calls to elevators as new information is revealed over time.  In fact, such reassignments are completely 
oblivious to the waiting passengers. Consequently, the problem of group elevator scheduling is a hard problem 
and continues to be researched due to its practical significance.  

In this paper, we address the problem of assigning the hall calls to elevator cars so as to 
minimize the average 
waiting time (AWT) of the passengers. The waiting time of a single passenger is the time taken from the 
moment the passenger presses the hall call button at a floor to the instant at which the passenger is picked up. 

We address the more prevalent elevator systems wherein:
\begin{itemize}
\item Hall calls do not provide information on destination floors. 
\item Number of passengers behind a hall call is not known. 
\item No prediction of future passengers is available.
\item Reassignment of hall calls is allowed.
\end{itemize}
In this setting, it is not possible to compute the {\bf exact waiting time}. Consequently, we use information on 
the current car calls and the motion of the elevator car to compute an {\textbf{estimate of the waiting times}. 
Reassignment of hall calls to elevators, wherein the scheduler can modify the previous assignments as long as the 
hall call has not been serviced, helps to mitigate the effect of the uncertainty in the hall calls. We propose an 
approximation algorithm to determine the assignments that minimize the estimated average waiting times.  

\section{Prior Work}
Most of the existing methods address one or more of the following scenarios:
\begin{itemize}
\item{Sensing and User Inputs: Algorithms that rely on additional user-interface devices that allow the users to 
enter their destination floors, or the use of sensors to detect the number of passengers waiting at each floor.}
\item{Traffic-patterns: Algorithms that are tailor-made for specific traffic patterns.}
\item{Design Criteria: Algorithms that are motivated by different design criteria such as estimated time of 
arrival, average waiting time, etc.}
\end{itemize}


\subsection{Sensing and User Inputs} In a destination control elevator system, the passengers provide the destination information to the scheduler before they board the cars. Several AI techniques~\cite{Koehler2002} have been used to address the scheduling of destination control systems, and this problem has been shown to be NP-hard even in the simplest settings~\cite{Seckinger1999}. In the case of full destination control scenario, the problem of group elevator scheduling can be formulated using planning domain definition language (PDDL)~\cite{Koehler2000,Koehler2002}.

In the presence of complete information, elevator scheduling problem has been addressed using mixed integer linear 
programming (MILP)~\cite{Ruokokoski2015,Xu2016}. Genetic algorithms have also been proposed to generate control policy 
for elevator scheduling~\cite{Dai2010}. 

In studies dating back to 1995, computer vision algorithms have been proposed to automatically count the 
number of passengers waiting at each floor, but they are not commonly used due to the cost of retro-fitting 
existing installations~\cite{Schofield1995}. The availability of complete information may become feasible in 
the case of new and high-rise buildings, but the market for such buildings is much smaller than existing 
installations. Hence, improving the performance of schedulers with incomplete information continues to be 
a very relevant and challenging problem to date. 

\subsection{Traffic-patterns}
Elevator traffic patterns can broadly be classified into three types: up-peak, down-peak, and inter-floor. In the 
up-peak case, passengers always enter the elevator system at the lobby and request upward rides. In 
the case of down-peak, the passengers request downward rides from all floors to the lobby. In the case of inter-floor the passengers request upward or downward rides between different floors, but not to or from the lobby. For up-peak traffic case, special purpose algorithms are developed based on queuing theory~\cite{Pepyne1997}. In the down-peak case, efficient algorithms have been developed using Finite Intervisit Minimization (FIM) and Empty the System Algorithm (ESA)~\cite{Bao1994}. The intensity of each traffic pattern can be represented using fuzzy variables and fuzzy logic can be 
used for the assignment of cars to hall calls~\cite{Ujihara1988,Ujihara1994}. In particular, the fuzzy rules can be used to identify the specific pattern and this can allocate a specific control algorithm for scheduling. Most of the time, the traffic patterns can be seen as the combination of these three basic patterns, e.g., at lunch time it could be 45 \% down-peak, 45 \% up-peak, and 10 \% inter-floor. One of the biggest challenges in exploiting this traffic pattern is the difficulty in accurate detection and the high frequency at which these patterns change. 

Neural networks and reinforcement learning methods have also been used for elevator scheduling problems~\cite{Crites1996,Markon1994}. In particular, Crites and Barto combined neural networks and Q-learning to demonstrate improved performance 
over FIM and ESA for one specific down-peak scenario~\cite{Crites1996}. However, it took about 60,000 hours of training time and thereby impractical for real elevator systems. On the other hand, with the recent progress in deep learning methods with the use of GPUs, this might be an interesting future direction. 

There have been algorithms that use dynamic programming to compute the exact minimization of average waiting 
time (AWT)~\cite{Nikovski2003}.  Their approach was to model it as Markov decision process (MDP). However, they assume 
that the number of passengers waiting behind a hall call is known. 

\subsection{Design Criteria}

One of the popular elevator scheduling policy is collective control, where the cars stop at the nearest call in the 
running direction~\cite{Strakosch1998}. In particular, the system computes the travel distances between hall calls 
and elevators. The scheduler assigns the hall call to the "nearest" car in terms of the distance. This strategy is 
not optimal and usually results in bunching, where several cars arrive at the same floor at the same time. 

We can also allocate hall calls by minimizing the estimated time of arrival (ETA) instead of the distance. 
To avoid the problem of bunching, we can use other approaches such as zoning or sectoring, where we partition 
the building into zones and each car can serve only one zone. In other words, we consider the same number of 
zones as the number of elevators. This strategy improves performance when the traffic is downwards towards the 
lobby. It has been shown that this strategy is also suboptimal when too many passengers arrive in the same 
zone~\cite{Barney1985,Strakosch1998}.

Relative system response (RSR) is a scheduling strategy used by Otis Elevator company to optimize a heuristic criterion based on a weighted sum of penalties and bonuses computed for every car, without any explicit relation to the actual AWT~\cite{Bittar1982}. Another criteria used by Otis that is well correlated with AWT is remaining response time (RRT), which is the time required by a car to reach the floor of the hall call~\cite{Powell1992}. Approximate methods have been used to compute the average waiting time (AWT) by predicting the number of stops for passenger pickups and the most likely floor at which the car reverses its direction~\cite{Siikonen1997,Cho1999,Barney1985}. 


\subsection{Our Contributions}
\begin{itemize}
\item To the best of our knowledge, we are not aware of any group elevator scheduling algorithm that is based on 
submodularity. In this work, we show that the problem of group elevator scheduling can be formulated as the maximization 
of submodular functions under a matroid constraint. Submodularity is one of the key concepts in machine learning, computer vision, economics, and game theory. Recently it has found application in such diverse domains as: sensor placement~\cite{Guestrin2008}, outbreak detection~\cite{Leskovec2007}, word alignment~\cite{Lin2011}, clustering~\cite{liu2013}, viral marketing~\cite{Kempe2003}, and finding diverse subsets in structured item sets~\cite{Prasad2014}.

\item Our formulation enables the use of a simple greedy algorithm with guarantees on the optimality of the 
solution. 
\item We use as objective a quadratic Boolean function that approximates the average waiting times of passengers.
\item Existing efforts in the literature demonstrate their advantages using customized simulation data, which makes it harder to evaluate and understand their usefulness. We evaluate using Elevate 8 and show consistent and significant improvement over the implemented standard scheduling algorithms in Elevate 8 over a wide variety of elevator settings. 
\end{itemize}

\section{Background}
In this section, we briefly define useful entities such as set functions, submodularity, and matroids. 

\noindent
{\bf Notations:}
Let $\mathbb B$ denote the Boolean set $\{0,1\}$ and $\mathbb R$ the set of reals. We use ${\bf x}$ 
to denote vectors. 

\begin{defn} A set function $F:2^{E} \rightarrow \mathbb{R}$, where $E$ is a finite set, maps a set to a real number. A set function can also be seen as a pseudo-Boolean function~\cite{BorosH02} that takes a Boolean vector as argument and returns a real number.
\end{defn}

\begin{defn} A set function $F:2^{E} \rightarrow \mathbb{R}$ is submodular if for all $A,B \subseteq E$ with $B \subseteq A$ and $e \in E \backslash A$, we have: 
\begin{equation}
F(A \cup \{e\}) - F(A) \le F(B \cup \{e\}) - F(B). \label{eqn.dimreturns}
\end{equation}
\label{def.submodularity}
\end{defn}

This property is also referred to as diminishing return since the gain from adding an element to a small set $B$ is never smaller than adding it to a superset $A \supset B$~\cite{George_Nemhauser_MP1978}.

\begin{defn}
A set function $F$ is monotonically increasing if for all $A,B \subseteq E$ and $B \subseteq A$, we have:
\begin{equation}
F(B) \le F(A).
\end{equation}
\end{defn}

\begin{defn}
A matroid is an ordered pair $M=(E,\mathcal{I})$ consisting of a finite set $E$ and a 
collection $\mathcal{I}$ of subsets of $E$ satisfying the following three conditions:
\begin{enumerate}
\item $\emptyset \in \mathcal{I}$.
\item If $I\in \mathcal{I}$ and $I^{\prime}\subseteq I$, then $I^{\prime} \in \mathcal{I}$.
\item If $I_1$ and $I_2$ are in $\mathcal{I}$ and $|I_1|<|I_2|$, then there is an element 
$e \in I_2-I_1$ such that $I_1\cup \{e\} \in \mathcal{I}$.
\end{enumerate}
\end{defn}

The members of $\mathcal{I}$ are the independent sets of $M$. Note that there exist several other definitions for matroids that are equivalent. For more details, one can refer to~\cite{James_Oxley_1992}.

\begin{defn}
Let $E_1,\ldots,E_n$ be the partition of the finite set $E$ and let $k_1,\ldots,k_n$ be positive integers. The ordered pair $M=(E,\mathcal{I})$ is a partition matroid if:
\begin{equation}
\mathcal{I} = \{I: I \subseteq E,~~~~~|I \cap E_i| \le k_i, 1 \le i \le n\}.
\end{equation} 
\end{defn}

\section{Problem Formulation}
In this section, we formulate group elevator scheduling as a quadratic Boolean optimization problem. 
The qualifier ``quadratic'' is due to our choice of model for the waiting times of passengers.  

To set the stage, suppose there are $N$ hall calls that need to be assigned to elevators.  Note that these 
$N$ hall calls include previously assigned calls that are considered for reassignment and new hall calls 
that have just requested service. We do not allow reassignment when the assigned car is already close to 
servicing the assigned hall call.

The decision variables in the problem are denoted by $x^c_i$. The variable $x^c_i$ is a Boolean variable to 
denote the assignment of the hall call $i$ to elevator car $c$:
\begin{equation}
x^c_i =
\begin{cases}
    1 & \mbox{if the hall call $i$ is assigned to car $c$,}\\
    0 & \mbox{otherwise.}
  \end{cases}
\end{equation}
Let ${\bf x} \in \{0,1\}^{N\cdot C}$ denote the vector storing all the 
assignments $\{x^1_1,x^1_2,\dots,x^1_N,\dots,x^C_N\}$. 
In any \emph{feasible assignment} of hall calls to elevator cars, each hall call must be assigned to exactly 
one elevator car.  This can be imposed through the constraint,
\begin{equation}
\sum_{c=1}^C x^c_i = 1,~~\forall i \in \{1,...,N\}. \label{feasassign}
\end{equation}
The number of feasible assignments is $C^N$.  If we denote the waiting time associated with a feasible 
assignment ${\bf x}$ as $w^{\text{true}}({\bf x})$ then the group elevator scheduling problem can be posed as 
minimizing $w^{\text{true}}({\bf x})$ over all feasible assignments ${\bf x}$.  Clearly, computing the terms in the 
objective of such an optimization problem is an onerous task and is infeasible for real elevator systems 
in which decisions are typically made every second or fractions of a second.

To alleviate this bottleneck we propose the following approximation for the total waiting time to serve the 
hall calls,
\begin{equation}
g({\bf x})=\sum_{i=1}^{N}\sum_{c=1}^{C}w_i^c x_i^c + 
\sum_{i=1}^{N}\sum_{j=i+1}^{N}\sum_{c=1}^{C}w_{ij}^c x_i^c x_j^c \label{obj}
\end{equation}
where $w_i^c$, $w_{ij}^c$ are computed based on the current state of the elevator car $c$ as explained next.  
The term $w_i^c$ is the waiting time for car $c$ to pick up the passenger(s) for hall call $i$ given the 
current set of car calls for the car $c$.  Consider the assignment of the hall calls to car C1 in  Figure~\ref{fg.intro_elev_schedule}.   We first consider the assignment of hall call on floor $3$ to car C1 as shown 
in Figure~\ref{figwa}.  Since C1 already has a car call to floor 5 the waiting time for the hall call can be computed as
\[
w^c_i = t_{3 \rightarrow 5} + t_{door} + t_{5 \rightarrow 2}
\]
where $t_{i \rightarrow j}$ represents the time to travel from floor $i$ to floor $j$ and $t_{door}$ includes the sum of door 
opening, dwell and closing times.  Car C1 moves from a state of rest to state of rest when it travels 
from floor 5 to floor 2.  On other hand, for the journey from floor 3 to 5 the car may already be in motion or at rest.  
Hence, the time required needs to be computed according to the kinematic state of the elevator car.  Such 
kinemattic formulas have been derived in~\cite{peters93} as a function of the velocity, acceleration and jerk of the car 
and the maximum limits for the same.   Note that the other hall calls are completely ignored in performing this computation.  We refer to this as \emph{unary} term since the influence of other hall calls are completely 
ignored in the computation.  In a similar manner, we can compute the waiting time $w^c_j$ for picking the passenger 
requesting hall call from floor 7 (refer Figure~\ref{figwb}).  

The term $w^c_{ij}$ represents the excess over the $w^c_i + w^c_j$ that is incurred when both hall calls $i$ and 
$j$ are assigned to the same car $c$.  We refer to this term as \emph{pairwise} term. 
In other words, $(w^c_i + w^c_j + w^c_{ij})$ is the total waiting time to 
pick up passengers for hall calls on floors $i$ and $j$ given the current set of car calls for car $c$ 
and can be obtained as
\[\begin{aligned}
w^c_{ij} =&\, t_{3 \rightarrow 5} + t_{door} + t_{5 \rightarrow 7} + t_{door} + t_{7 \rightarrow f} + t_{door} \\
&\,+ t_{f \rightarrow 2} - w^c_i - w^c_j
\end{aligned}\]  
where $f$ represents the \emph{unknown} destination floor of the passenger on floor 7.  In the computation of 
$w^c_{ij}$ we have chosen to serve the hall call at floor 7 before serving the hall call at floor 2.  This choice is  
dictated by typical elevator movement rules.  Once a car is empty, 
\begin{itemize}
\item The car maintains its upward (downward) direction of motion if there are up (down) hall calls at floors above (below) the current floor of the elevator car.  
\item If not, the car moves to the highest (lowest) floor at which downward (upward) hall call exists.  
\item If not, then the car moves downward (upward) until the lowest (highest) floor with hall call in the 
upward (downward) direction exists.  
\end{itemize}
Unlike in the computation of the unary terms, notice that we require knowledge of the destination floor $f$ 
for the first hall call that is serviced.  Since this information is not available we instead compute the pairwise 
term as an expectation over all possible destination floors for that hall call,
\[\begin{aligned}
w^c_{ij} =& \sum\limits_{f \in \{1,\ldots,6\}} \omega_f \cdot \left(\begin{aligned} 
t_{3 \rightarrow 5} + t_{door} + t_{5 \rightarrow 7} + t_{door} \\
+ t_{7 \rightarrow f} + t_{door} + t_{f \rightarrow 2} 
\end{aligned}\right) \\
& - w^c_i - w^c_j
\end{aligned}\]  
where $\omega_f$ is the probability of floor $f$ being the destination.  If there exists additional information 
on the probabilities these can be easily incorporated in the computation for the pairwise term.  However, in the 
absence of such information we assume that destination floors are equally likely. Observe that the set of destination floors is only $\{1,\ldots,6\}$ since the hall call requests downward service from floor 7. As in the computation of the unary terms, the remaining hall calls are ignored in performing this computation. We can state the following result on the pairwise term.

\begin{lem}\label{lem:nonnegativewij}
$w^c_{ij} \geq 0$.
\end{lem}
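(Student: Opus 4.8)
The plan is to exploit the defining property of the pairwise term: by construction $w^c_{ij}$ is the \emph{excess} of the combined waiting time incurred when both hall calls $i$ and $j$ are served by car $c$ over the sum $w^c_i + w^c_j$ of their individually computed waiting times. I would therefore prove the equivalent statement that the combined waiting time is at least $w^c_i + w^c_j$, and I would get this by showing each passenger separately: when both $i$ and $j$ are assigned to $c$, the waiting time experienced by the passenger of hall call $i$ is at least $w^c_i$, and that of the passenger of $j$ is at least $w^c_j$. Adding the two inequalities gives $w^c_i + w^c_j + w^c_{ij} \ge w^c_i + w^c_j$, i.e.\ $w^c_{ij} \ge 0$. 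The intuition is simply that giving the car an extra call can never let it reach a given call sooner.

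First I would fix the deterministic trajectory that car $c$ follows under the stated movement rules when both calls are pending, and identify which of the two, say $j$, is picked up first. I would observe that the portion of this trajectory up to the first pickup coincides with the trajectory the car follows when only $j$ (together with the existing car calls) is assigned: the second call $i$ is served strictly later and so does not alter the car's route or its reversals before $j$ is reached. Hence the first passenger's waiting time equals $w^c_j$ exactly, and the inequality holds for $j$ with equality.

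Next I would treat the call $i$ that is picked up second. Its pickup time in the joint schedule is the duration of a route that first detours to serve $j$ (and any intervening car calls) before reaching $i$, whereas $w^c_i$ is the duration of the direct rule-based route to $i$. The key step is to show the detour route is never faster, which rests on two physical facts about the quantities $t_{a \to b}$ and $t_{door}$: door times are nonnegative, and travel times obey a triangle-type inequality $t_{a \to c} \le t_{a \to b} + t_{door} + t_{b \to c}$, since a nonstop direct run between two floors is never slower than one that decelerates to rest and reopens the doors at an intermediate floor, and a direction reversal only adds further travel. Iterating this bound along the detour collapses it to the direct route and yields a pickup time for $i$ of at least $w^c_i$.

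Finally, because the destination floor $f$ of the first-served call is unknown and $w^c_{ij}$ is computed as an expectation over $f$ with weights $\omega_f \ge 0$ summing to one, I would note that the per-realization inequality holds for every admissible $f$ and is therefore preserved under the convex combination. The main obstacle is the rigor of the second step: one must verify, by a case analysis over the movement rules, that the presence of the later-served call never shortens the car's route to \emph{either} call, so that the triangle-type inequality applies cleanly; this monotonicity of the waiting time in the set of assigned calls is the crux. Once it is established, the equality for the first-served call and the expectation argument are routine.
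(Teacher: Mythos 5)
Your proposal is correct and follows essentially the same route as the paper: decompose the joint schedule so that the first-served call waits exactly its unary time while the second-served call waits at least its unary time because of the intermediate stop and door operations, then conclude $w^c_{ij}\ge 0$. You supply more rigor than the paper does (the triangle-type inequality for the detour and the observation that the expectation over the unknown destination floor $f$ preserves the per-realization inequality), but the underlying argument is the same.
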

\begin{proof}
Suppose two hall calls $i,j$ are assigned to the same car and that hall call $i$ is serviced first.  
The waiting time for hall call $i$ is exactly $w^c_i$.  If the second hall call is also on the same floor then 
the total waiting time is $w^c_i+w^c_j$.  If not the waiting time for hall call $j$ is greater than $w^c_j$ since the 
car makes an intermediate stop and there is also time associated with door operation to pick up the passenger for hall call $i$.  Hence, the pairwise term is always nonnegative.
\end{proof}

Clearly, $g({\bf x})$ is \emph{exact} when no more than 2 hall calls are assigned to each elevator car.  
For higher number of assignments to a car this is only a \emph{proxy} for the actual waiting times of the 
passengers.  However, our choice of this quadratic form for $g(\bf x)$ is motivated by:
\begin{itemize}
\item  \emph{nice} properties that (we prove in the next section) that allow us to derive simple algorithms with 
provable guarantees
\item reduced computational effort in computing the objective - $O(N\cdot C + N(N-1)\cdot C)$.
\end{itemize}

\begin{figure}[h]
\centering
\subfigure[$w_i^c$]{\includegraphics[scale=0.35]{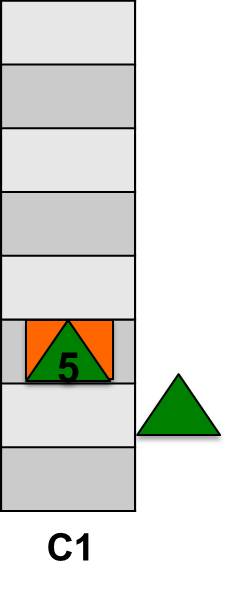}\label{figwa}}\qquad
\subfigure[$w_j^c$]{\includegraphics[scale=0.35]{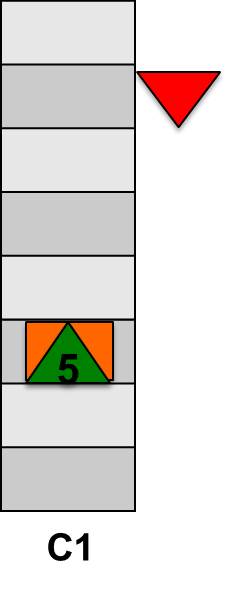}\label{figwb}}\qquad
\subfigure[$w_{ij}^c$]{\includegraphics[scale=0.35]{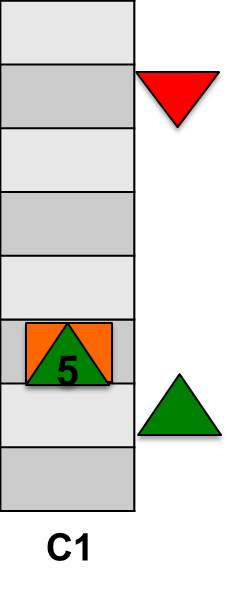}\label{figwc}}
\caption{Computation of unary and pairwise terms.}\label{figw}
\end{figure}

Let ${\bf w}$ denote the vector storing all the weights $\{w^1_1,\dots,w_1^C,w^1_2,\dots,w_N^C,w_{12}^1,\dots,w_{{N-1}N}^C\}$.

We write the group elevator scheduling problem to minimize the waiting time as the following quadratic 
Boolean optimization problem:
\begin{equation}
\begin{aligned}
\min_{{\bf x}} &\,\, g(\bf x) \\
\text{s.t} &\,\, \sum_{c=1}^C x^c_i = 1,~~\forall i \in \{1,...,N\}, \\
&\,\, {\bf x} \in \{0,1\}^{N\cdot C}.
\end{aligned}
\label{qbo}
\end{equation}

\section{Submodular Maximization \& Matroid}
In the section we show that the quadratic Boolean optimization in Equation~\eqref{qbo} can be posed as a 
submodular maximization problem over a matroid. To begin with, we pose the problem in Equation~\eqref{qbo} as a maximization using 
negations in the objective function:
\begin{equation}
\begin{aligned}
\max_{{\bf x}} &\,\, -g(\bf x) \\
\text{s.t} &\,\, \sum_{c=1}^C x^c_i = 1,~~\forall i \in \{1,...,N\}, \\
&\,\, {\bf x} \in \{0,1\}^{N\cdot C}.
\end{aligned}
\label{eq.qbo}
\end{equation}
Some related notation on set functions that will allow us to formulate the problem over sets and help us to introduce 
the matroid constraints. 

Let us consider a finite set $E$ comprising the assignments:
\begin{equation*}
E = \{\ass{1}{1},\ldots,\ass{N}{1},\ldots,\ass{1}{C},\ldots,\ass{N}{C}\}
\end{equation*}
where $\ass{i}{c}$ denotes that hall call $i$ is assigned to car $c$.  
The objective function $g(\bf x)$ can be equivalently defined over the subsets of $E$ as follows. 
Let $h:2^{E} \rightarrow \mathbb{R}$ be a set function defined for for any set $A \subset E$ as:
\begin{align}
& h(A) = -g({\bf x}), \label{eq.submodular_objfunction} \\
\text{ where }		& x^c_i = 
	\begin{cases}
    1 & \mbox{$\ass{i}{c} \in A$,}\\
    0 & \mbox{otherwise}
  \end{cases}, \forall \left\{\begin{aligned} c \in \{1,\dots,C\}, \\ i \in \{1,\dots,N\} \end{aligned}\right. .
	\label{eq.uniquenessConstraint}
\end{align}.

\begin{lem}
The function $h(A)$ is submodular.
\label{lem.sumodularity1}
\end{lem}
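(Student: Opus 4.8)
The plan is to verify the diminishing-returns inequality of Definition~\ref{def.submodularity} directly, by computing the marginal gain of adding a single assignment to a set and comparing that gain across nested sets. Fix an element $e = \ass{i^*}{c^*} \in E \setminus A$ together with a set $A$ and its associated Boolean vector ${\bf x}$. Adding $e$ flips $x_{i^*}^{c^*}$ from $0$ to $1$, which raises $g$ by the unary contribution $w_{i^*}^{c^*}$ plus one pairwise contribution for every hall call $j \neq i^*$ that is already assigned to the \emph{same} car $c^*$ in $A$. Writing $w_{\{i^*,j\}}^{c^*}$ for the pairwise weight of the pair $\{i^*,j\}$ on car $c^*$ (irrespective of whether $j<i^*$ or $j>i^*$), the marginal gain in $h=-g$ is
\begin{equation*}
h(A \cup \{e\}) - h(A) = -w_{i^*}^{c^*} - \sum_{j\,:\,\ass{j}{c^*} \in A} w_{\{i^*,j\}}^{c^*}.
\end{equation*}

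Next I would record the analogous expression for $B$ and subtract. Since $B \subseteq A$, every hall call $j$ assigned to $c^*$ in $B$ is also assigned to $c^*$ in $A$, so the index set of the pairwise sum for $B$ is contained in that for $A$. The unary term $-w_{i^*}^{c^*}$ is identical in both marginals and therefore cancels, leaving
\begin{equation*}
\bigl(h(A \cup \{e\}) - h(A)\bigr) - \bigl(h(B \cup \{e\}) - h(B)\bigr) = -\sum_{j\,:\,\ass{j}{c^*} \in A \setminus B} w_{\{i^*,j\}}^{c^*}.
\end{equation*}
The key step is to observe that this residual sum is nonnegative: each weight $w_{\{i^*,j\}}^{c^*}$ is nonnegative by Lemma~\ref{lem:nonnegativewij}, and the summation ranges over a well-defined (possibly empty) index set. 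Hence the right-hand side is at most $0$, which is precisely the inequality $h(A \cup \{e\}) - h(A) \le h(B \cup \{e\}) - h(B)$ demanded by submodularity.

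I expect the principal obstacle here to be bookkeeping rather than depth: I must ensure that the pairwise terms affected by adding $\ass{i^*}{c^*}$ are exactly those whose other member is assigned to the same car $c^*$, and that the $i<j$ index convention in $g({\bf x})$ is handled symmetrically so that no pair is double-counted or dropped when $j<i^*$. As a cleaner alternative I would keep a structural argument in reserve: $h$ decomposes as a modular (linear) part $-\sum_{i,c} w_i^c x_i^c$ plus a sum of terms $-w_{ij}^c\, x_i^c x_j^c$. Each product $x_i^c x_j^c$, read as the indicator that both $\ass{i}{c}$ and $\ass{j}{c}$ lie in the set, is supermodular, so negating it with the nonnegative coefficient $w_{ij}^c$ (again invoking Lemma~\ref{lem:nonnegativewij}) yields a submodular term; since a modular function is submodular and a sum of submodular functions is submodular, $h$ is submodular.
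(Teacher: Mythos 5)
Your proposal is correct and follows essentially the same route as the paper: compute the marginal gain $h(A\cup\{\ass{i}{c}\})-h(A) = -w_i^c - \sum_{j:\,\ass{j}{c}\in A} w_{ij}^c$, note that the pairwise sum over $B\subseteq A$ is a subsum of that over $A$, and invoke the nonnegativity of the weights from Lemma~\ref{lem:nonnegativewij} to obtain the diminishing-returns inequality. Your extra care with the $i<j$ indexing convention and the reserve argument via a modular-plus-negated-supermodular decomposition are both sound but not needed beyond what the paper already does.
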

\begin{proof}
Consider two sets $A$ and $B$ where $B \subseteq A$ and $A,B \subseteq E$. Let 
$\ass{i}{c} \in E \backslash A$. We can observe that:
\begin{equation}
\begin{aligned}
h(A \cup \{\ass{i}{c}\}) - h(A) =  - w_i^c - \sum_{(j,c) : \ass{j}{c} \in A} w_{ij}^c  \\
h(B \cup \{\ass{i}{c}\}) - h(B) =  - w_i^c - \sum_{(j,c) : \ass{j}{c} \in B} w_{ij}^c \nonumber 
\end{aligned}\label{eqn.incrchange}
\end{equation}
Since unary and pairwise terms are non-negative (refer Lemma~\ref{lem:nonnegativewij}) and $B \subseteq A$, 
\begin{equation*}
h(A \cup \{\ass{i}{c}\}) - h(A) \le h(B \cup \{\ass{i}{c}\}) - h(B)
\end{equation*}
which proves the claim.
\end{proof}


Now we consider the constraint in Equation~\eqref{feasassign} modeling the assignment of a hall call to exactly one elevator car.  We show that this can be formulated using partition matroid constraint. In order to do that, let us consider $N$ disjoint subsets of the ground set $E$ as shown below:
\begin{eqnarray*}
E_1 & = &  \{\ass{1}{1},\ldots,\ass{1}{C}\} \\
&\vdots & \\
E_N & = &  \{\ass{N}{1},\ldots,\ass{N}{C}\}.
\end{eqnarray*}
Let $M=(E,\mathcal{I})$ denote a partition matroid such that each independent set shares no more than one element with each of the disjoint subsets as shown below:
\begin{equation}
|I \cap E_i | \le 1,~~~~~I \in \mathcal{I},\forall i \in \{1,\dots,N\}
\end{equation}

Let us consider the following submodular function maximization under a partition matroid constraint:
\begin{equation}
\begin{aligned}
\max_{A \subseteq E} &\,\, h(A) \\
\text{s.t.} &\,\, A \in \mathcal{I}, M = \{E,\mathcal{I}\}.
\end{aligned}
\tag{SFM-1}\label{eq.sfm1}
\end{equation}
Note that the above optimization is different from the maximization problem given in Equation~\ref{eq.qbo} because the constraints are different. The partition matroid only enforces that each hall call is assigned to no more than one elevator car (i.e., $|A \cap E_i| \leq 1$). However, it does not enforce that each hall call is assigned to at least one elevator car (i.e., $|A \cap E_i| \geq 1$ is not enforced). 
%
In order to do that, we add a penalty term such that an assignment that violates equation~\eqref{feasassign} (i.e. $|A \cap E_i| < 1$) has a lower objective than any feasible assignment. This will ensure that each hall call is assigned to at least one elevator car. We consider the following additional term $h_1(A)$ in the objective function:
\begin{equation}
\begin{aligned}
& h_1(A) = - \sum\limits_{i=1}^N p_i \cdot (C - |A\cap E_i|) \\
\text{where, } &p_i = \max_{c \in \{1,\dots,C\}} \left( w^c_i + \sum_{j = 1, j \neq i}^N w^c_{ij} \right).
\end{aligned}\label{eqn.defh1}
\end{equation}
The term $h_1(A)$ reduces the objective function by $C\sum_{i=1}^N p_i$ if there is no assignment, i.e., 
$A = \emptyset $.  Since $p_i \ge 0$, we increase the objective function for every assignment of hall call to 
a car ($\ass{i}{c}$) included in the set $A$.  Further, the function $h_1(A)$ satisfies the following property 
for any $\ass{i}{c} \notin A$,
\begin{equation}
h_1(A \cup \{\ass{i}{c}\}) - h_1(A) = p_i. \label{eqn.proph1}
\end{equation}
The following lemma shows that the combined objective function is submodular. 
\begin{lem}
The function $h(A) + h_1(A)$ is submodular.
\label{lem.sumodularity2}
\end{lem}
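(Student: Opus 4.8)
The plan is to reduce the claim to the elementary fact that the sum of a submodular function and a modular (additive) function is again submodular. This works because submodularity is defined by the diminishing-returns inequality of Definition~\ref{def.submodularity}, and adding an \emph{equality} to both sides of that inequality preserves it. Since $h(A)$ is already known to be submodular by Lemma~\ref{lem.sumodularity1}, it suffices to show that $h_1(A)$ contributes an equality, i.e.\ that $h_1$ is modular.

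The key step is to verify that the marginal gain of $h_1$ from adding any element is independent of the set to which it is added. This is exactly the content of equation~\eqref{eqn.proph1}: for any $\ass{i}{c}\notin A$ we have $h_1(A\cup\{\ass{i}{c}\}) - h_1(A) = p_i$. The reason is that adjoining $\ass{i}{c}$ to $A$ increases $|A\cap E_i|$ by precisely one, since $\ass{i}{c}\in E_i$ and was absent from $A$; this decreases the deficiency $C-|A\cap E_i|$ by one and hence raises $h_1$ by $p_i$. Crucially, the value $p_i$ defined in~\eqref{eqn.defh1} does not depend on $A$ at all, so the gain is set-independent.

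With modularity of $h_1$ in hand, I would conclude by checking the submodularity inequality directly. Fixing $B\subseteq A\subseteq E$ and $e=\ass{i}{c}\in E\setminus A$, Lemma~\ref{lem.sumodularity1} gives
\[
h(A\cup\{e\}) - h(A) \le h(B\cup\{e\}) - h(B),
\]
while the modularity of $h_1$ established above gives
\[
h_1(A\cup\{e\}) - h_1(A) = p_i = h_1(B\cup\{e\}) - h_1(B).
\]
Adding these two relations term by term yields the diminishing-returns inequality for $h+h_1$, which is precisely the claim.

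I do not anticipate a genuine obstacle here. The only point demanding a little care is confirming that the marginal gain of $h_1$ is truly constant in the argument set, which relies on $e\notin A$ so that the intersection count $|A\cap E_i|$ increments by exactly one rather than remaining unchanged; once equation~\eqref{eqn.proph1} is justified on this basis, the rest of the argument follows immediately from closure of submodular functions under addition of modular functions.
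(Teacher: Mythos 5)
Your proposal is correct and follows essentially the same route as the paper: both show that $h_1$ has a set-independent marginal gain $p_i$ (equation~\eqref{eqn.proph1}), so it satisfies the diminishing-returns inequality as an equality, and then combine this with the submodularity of $h$ from Lemma~\ref{lem.sumodularity1} via closure of submodular functions under addition. Your justification of why the marginal gain equals $p_i$ (the intersection count $|A\cap E_i|$ increasing by exactly one) is slightly more explicit than the paper's, but the argument is the same.
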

\begin{proof}
Consider two sets $A, B \subset E$ with $B \subset A$.  Let $\ass{i}{c} \in E\setminus A$.  Then, by 
Equation~\eqref{eqn.proph1} we have that
\[
h_1(A \cup\{\ass{i}{c}\}) - h_1(A) = h_1(B \cup \{\ass{i}{c}\}) - h_1(B) = p_i.
\]
In other words, $h_1(A)$ satisfies the condition for submodularity (Equation~\eqref{eqn.dimreturns}) in 
Definition~\ref{def.submodularity} as an equality.  
The sum of two submodular functions is submodular and thus $h(A) + h_1(A)$ is submodular. 
\end{proof}

We consider the following submodular function maximization problem that is equivalent to the quadratic 
Boolean optimization problem in Equation~\ref{eq.qbo}.
\begin{equation}
\begin{aligned}
\max_{A \subseteq E} &\,\, h(A) + h_1(A) \\
\text{s.t.} &\,\, A \in \mathcal{I}, M = \{E,\mathcal{I}\}. 
\end{aligned}
\tag{SFM-2}\label{eq.sfm2}
\end{equation}
In this work, we use a greedy algorithm to solve the maximization of the submodular function under the matroid constraint. 
The use of greedy algorithm is motivated by the following theorem:

\begin{thm}~\cite{George_Nemhauser_MP1978}\label{thm1}
For maximizing monotonically non-decreasing submodular functions under a matroid constraint, the optimality of the 
greedy algorithm is characterized by the following equation:
\begin{equation}
f(A_{greedy}) \ge \frac{1}{2} f(A_{OPT}),
\label{eq.greedy_bound}
\end{equation}
where $f(\emptyset) = 0$. 
\label{th.greedy_theorem}
\end{thm}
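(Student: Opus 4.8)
The plan is to follow the classical exchange argument of Nemhauser, Wolsey and Fisher. Write the greedy output as $A_{greedy}=\{a_1,\ldots,a_k\}$, indexed in the order the elements are selected, and let $A_j=\{a_1,\ldots,a_j\}$ denote the $j$-th prefix, so that $A_0=\emptyset$ and $A_k=A_{greedy}$. Since $f$ is monotonically non-decreasing, I may extend $A_{OPT}$ to a maximal independent set without decreasing its value, so I will assume both $A_{greedy}$ and $A_{OPT}$ are bases; all bases of a matroid have the same cardinality, so both have size $k$. The single fact about marginals I will use repeatedly is the greedy selection rule: at step $j$, among all $e$ with $A_{j-1}\cup\{e\}\in\mathcal{I}$, the chosen $a_j$ maximizes $f(A_{j-1}\cup\{e\})-f(A_{j-1})$.

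The first ingredient is a purely combinatorial matroid lemma (strong basis exchange): there is a bijection $\pi:A_{OPT}\to A_{greedy}$ that fixes every element of $A_{OPT}\cap A_{greedy}$ and satisfies $(A_{greedy}\setminus\{\pi(o)\})\cup\{o\}\in\mathcal{I}$ for every $o\in A_{OPT}$. I would cite this from the matroid reference already in the excerpt rather than reprove it. Its role is to pair each optimal element $o\in A_{OPT}\setminus A_{greedy}$ with a distinct greedy element $a_j=\pi(o)$. Because $A_{j-1}\subseteq A_{greedy}\setminus\{a_j\}$, the hereditary axiom gives $A_{j-1}\cup\{o\}\in\mathcal{I}$; hence $o$ was an admissible candidate at step $j$, so greedy optimality yields $f(A_j)-f(A_{j-1})\ge f(A_{j-1}\cup\{o\})-f(A_{j-1})$, and submodularity applied to $A_{j-1}\subseteq A_{greedy}$ upgrades this to $f(A_j)-f(A_{j-1})\ge f(A_{greedy}\cup\{o\})-f(A_{greedy})$.

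The second ingredient bounds $f(A_{OPT})$ from above. By monotonicity $f(A_{OPT})\le f(A_{OPT}\cup A_{greedy})$, and by telescoping the marginals of adding the elements of $A_{OPT}\setminus A_{greedy}$ one at a time and invoking submodularity term by term,
\[
f(A_{OPT}\cup A_{greedy}) - f(A_{greedy}) \le \sum_{o\in A_{OPT}\setminus A_{greedy}} \bigl(f(A_{greedy}\cup\{o\}) - f(A_{greedy})\bigr).
\]
Substituting the per-element bound from the previous paragraph and using that $\pi$ is a bijection, the right-hand side is at most $\sum_{j=1}^{k}\bigl(f(A_j)-f(A_{j-1})\bigr)=f(A_{greedy})-f(\emptyset)=f(A_{greedy})$, where summing over all $k$ steps rather than the matched subset is legitimate because each greedy marginal is non-negative by monotonicity. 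Combining the two displayed inequalities gives $f(A_{OPT})\le 2\,f(A_{greedy})$, which rearranges to the claimed bound.

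I expect the main obstacle to be the matroid exchange lemma and its correct coupling with the greedy prefixes: one must ensure the bijection is between bases of equal size and that the identity $\pi(o)=a_j$ genuinely certifies $A_{j-1}\cup\{o\}\in\mathcal{I}$ through the hereditary axiom, since this is precisely what licenses the greedy-optimality comparison at the right step. Once the pairing is in place, the remaining steps — telescoping, submodularity and monotonicity — are routine.
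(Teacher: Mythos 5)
Your proof is correct. Note that the paper does not prove this statement at all; it is quoted as a known result with a citation to Nemhauser, Wolsey and Fisher, so there is no in-paper argument to compare against. What you give is the standard exchange-based proof of the $1/2$ bound: the strong-basis-exchange bijection correctly licenses the greedy-optimality comparison at step $j$ (since $A_{j-1}\cup\{o\}$ is independent by the hereditary axiom), the submodularity upgrade to marginals with respect to $A_{greedy}$ is applied in the right direction, and the telescoping bound on $f(A_{OPT}\cup A_{greedy})$ together with nonnegativity of the greedy marginals and $f(\emptyset)=0$ closes the argument. The only external ingredient you lean on, Brualdi's strong basis exchange theorem, is standard and appropriately cited rather than reproved.
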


In the submodular optimization problem given in Equation~\eqref{eq.sfm2}, the objective function is not equal to 
zero when $A = \emptyset$. The optimality bound given in Theorem~\ref{th.greedy_theorem} applies for 
submodular function maximization where the function is equal to zero when the solution is an empty set. 
From Equation~\eqref{eqn.defh1}, $h(\emptyset) + h_1(\emptyset) = -C\sum_{i=1}^N p_i$.  
By adding a constant term ($C\sum_{i=1}^N p_i$) we can ensure that the requirement of Theorem~\ref{thm1} can 
be satisfied.  Further, the addition of a constant to a submodular function ($h(A) + h_1(A)$), also ensures 
submodularity of the resulting function $h(A) + h_1(A) + C\sum_{i=1}^N p_i$ (refer Definition~\ref{def.submodularity}). 
We consider the following optimization problem which is equivalent to SFM-2:
\begin{equation}
\begin{aligned}
\max_{A \subseteq E} &\,\, h(A) + h_1(A) + C\sum_{i=1}^N p_i \\
\text{s.t.} &\,\, A \in \mathcal{I}, M = \{E,\mathcal{I}\}.
\end{aligned}\tag{SFM-3}\label{eq.sfm3}
\end{equation}
In order to apply Theorem~\ref{th.greedy_theorem} to the above optimization problem, we also have to show that the objective function is monotonically non-decreasing.  
\begin{lem}
The function $h(B) + h_1(B) + C\sum_{i=1}^N p_i$ is monotonically non-decreasing. 
\label{lem.monotonic}
\end{lem}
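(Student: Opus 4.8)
The plan is to reduce monotonicity to a statement about single-element marginal gains and then bound that gain below by zero using the way $p_i$ was constructed. First I would observe that adding the constant $C\sum_{i=1}^N p_i$ shifts every function value by the same amount and therefore cannot affect monotonicity; it thus suffices to prove that $h(B)+h_1(B)$ is monotonically non-decreasing. Second, I would invoke the standard equivalence that a set function is monotonically non-decreasing precisely when every single-element marginal gain is non-negative, i.e. it is enough to show
\[
(h+h_1)(A\cup\{\ass{i}{c}\}) - (h+h_1)(A) \ge 0
\]
for every $A \subseteq E$ and every $\ass{i}{c} \in E\setminus A$.

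Next I would compute this marginal gain explicitly. Reusing the incremental-change expression from the proof of Lemma~\ref{lem.sumodularity1} for $h$, together with Equation~\eqref{eqn.proph1} for $h_1$, the marginal gain of adding $\ass{i}{c}$ to $A$ equals
\[
p_i - w_i^c - \sum_{j:\,\ass{j}{c}\in A} w_{ij}^c,
\]
where the pairwise sum ranges only over hall calls $j$ already assigned to the \emph{same} car $c$. The claim then reduces to establishing $p_i \ge w_i^c + \sum_{j:\,\ass{j}{c}\in A} w_{ij}^c$.

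Finally I would close the argument using the definition of $p_i$ and Lemma~\ref{lem:nonnegativewij}. By definition $p_i$ is the maximum over all cars $c'$ of $w_i^{c'} + \sum_{j\neq i} w_{ij}^{c'}$, so in particular $p_i \ge w_i^c + \sum_{j\neq i} w_{ij}^c$ for the specific car $c$ appearing in the element being added. Since every pairwise term is non-negative (Lemma~\ref{lem:nonnegativewij}), restricting the full sum over $j\neq i$ to the sub-collection $\{j:\ass{j}{c}\in A\}$ can only decrease it, so $w_i^c + \sum_{j\neq i} w_{ij}^c \ge w_i^c + \sum_{j:\,\ass{j}{c}\in A} w_{ij}^c$. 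Chaining these two inequalities yields the required bound and hence non-negativity of the marginal gain, which establishes monotonicity.

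I expect the only subtle point to be the bookkeeping in the pairwise sum: one must verify that the penalties activated by adding $\ass{i}{c}$ involve only hall calls sharing the same car $c$, and recognize that $p_i$ was defined as a maximum over cars precisely so that it dominates this worst-case marginal decrease for \emph{every} car. The rest of the argument is routine, since monotonicity is exactly the property the penalty weights $p_i$ were engineered to guarantee.
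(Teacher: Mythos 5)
Your proposal is correct and follows essentially the same route as the paper's proof: discard the constant, reduce monotonicity to non-negativity of single-element marginal gains, compute the gain as $p_i - w_i^c - \sum_{j:\,\ass{j}{c}\in A} w_{ij}^c$, and bound it below by zero using the definition of $p_i$ as a maximum together with the non-negativity of the pairwise terms from Lemma~\ref{lem:nonnegativewij}. The only cosmetic difference is that the paper spells out the reduction from single-element gains to general supersets by iterated addition rather than citing it as a standard equivalence.
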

\begin{proof}
The constant term $C\sum_{i=1}^N p_i$ does not affect the monotonicity of a function. 
Thus we need to only show that $h(B) + h_1(B)$ is 
monotonically non-decreasing for any $B \subset E$. Suppose that $\ass{i}{c} \notin B$ is added to the set $B$.  
From Equations~\eqref{eqn.incrchange} and~\eqref{eqn.proph1},  
\begin{align}
&\, h(B \cup \{\ass{i}{c}\}) + h_1(B  \cup \{\ass{i}{c}\}) - 
h(B) - h_1(B) \nonumber \\
=&\,- w_i^c  - \sum_{(j,c) : \ass{j}{c} \in B} w^c_{ij} + p_i \nonumber \\		
=&\, - w_i^c  - \sum_{(j,c) : \ass{j}{c} \in B} w^c_{ij} + 
\max_{c \in \{1,\dots,C\}} \left( w^c_i + \sum_{j = 1, j \neq i}^N w^c_{ij} \right).
\nonumber
\end{align}
Since the unary and pairwise terms are non-negative (refer Lemma~\ref{lem:nonnegativewij}), 
it can be readily seen that:
\begin{equation*}
\begin{aligned}
&\, \max_{c \in \{1,\dots,C\}} \left( w^c_i + \sum_{j = 1, j \neq i}^N w^c_{ij} \right) \\
\ge&\, w_i^c x_i^c + \sum_{(j,c) : \ass{j}{c} \in B} w^c_{ij} \ge 0. 
\end{aligned} 
\end{equation*}
Hence
\begin{equation}
h(B \cup \{\ass{i}{c}\}) + h_1(B  \cup \{\ass{i}{c}\}) \ge h(B) + h_1(B). \label{eqn.nondecr}
\end{equation}
Further any set $A$ with $B \subseteq A \subseteq E$ can be obtained by incrementally adding to set $B$ the 
elements in $A \setminus B$.  Thus, iterative application of Equation~\eqref{eqn.nondecr}  yields
\begin{equation*}
h(B) + h_1(B) \le h(A) + h_1(A) 
\end{equation*}
proving the claim.
\end{proof}

For the sake of completeness, we list the steps of the greedy algorithm used in maximization of 
submodular function under a matroid constraint~\cite{George_Nemhauser_MP1978}. Let the 
objective function in Equation~\eqref{eq.sfm3} be denoted by 
$f(A) = h(A) + h_1(A) + C\sum_{i=1}^N p_i$. 
\newline

\noindent
{\bf Greedy Algorithm:}
\begin{enumerate}
\item Initialize $S = \emptyset$.
\item Let $s = \arg\max_{s' \in E} f(S \cup \{s'\}) - f(S)$ such that $S \cup \{s'\} \in \mathcal{I}$.
\item If $s \ne \emptyset$ then $S = S \cup \{s\}$ and go to step 2. 
\item $S$ is the required solution. 
\end{enumerate}

Note that the objective function used in Equation~\eqref{eq.sfm3} is not just the average waiting time. 
We also add a penalty $h_1(A)$ and a constant term $C\sum_{i=1}^N p_i$. Thus the actual bound on the 
optimality using greedy algorithm is
\begin{equation}
h(A_{greedy}) + \sum_{i=1}^N p_i \ge \frac{1}{2} \left( h(A_{OPT}) + \sum_{i=1}^N p_i \right) \label{eqn.optbnd}
\end{equation}
Note the penalty term evaluates to $h_1(A) = -(C-1)\sum_{i=1}^Np_i$ for valid assignments where every hall call is assigned to 
one elevator car. Taking into account the constant term $C\sum_{i=1}^n p_i$ yields the offset in 
Equation~\eqref{eqn.optbnd}. 

We can also consider higher order terms to impose penalty on non-balanced assignment of passengers to elevator cars, i.e., assigning most of the passengers to single elevator cars. For example, we can impose a penalty whenever three passengers are assigned to a single elevator car. This penalty can be imposed using a third degree term, i.e., $ x_i^c x_j^c x_l^c$.  By adding a higher order term of order $k$ with negative coefficients to the function $h(A)$ the resulting function is known to be submodular~\cite{BorosH02,kolPAMI04}.


\section{Experiments}


We implemented our submodular maximization based greedy algorithm within the elevator simulator 
{\bf Elevate 8}\footnote{https://www.peters-research.com/index.php/8-elevate/58-elevate-8}. 
This is a commercial-grade simulator that allows the selection of 
the number of floors, the number of elevator cars, the speed/acceleration of cars, height of the 
floors, etc. It allows the user to choose different traffic patterns such as up-peak, down-peak, and 
inter-floor. In particular, the simulator provides several industry-standard methods such as group 
collective control, estimated time of arrival (ETA), destination control, etc. 

In addition to the details described earlier, we outline a few other enhancements to our algorithm:
\begin{itemize}
\item We considered only non-destination control scenarios. Since we do not know the destination floors, 
we consider all possible destinations and use their average to compute the delay. 
\item For the elevator cars that are close to capacity, we use a penalty term to avoid assigning  additional 
passengers. This is achieved by using a high unary cost for assigning additional passengers 
to these elevator cars. This ensures that these calls are assigned by the greedy algorithm in the later 
stage. After the greedy assignment, we remove these assignments from the respective elevators.
\item Door status determines the amount of time that elapses before the elevator can move away from a floor. 
This is important for correct assignment in low traffic conditions.  The simulator provides information on the 
door status and we appropriately include the additional time due to door operation. 
\item We give a bonus to hall calls assigned to a car with existing car calls for the hall call floor.  
The bonus is provided in the form of reduction of the unary term associated with the particular 
assignment.  The reduction is specified as,
\begin{equation}
w_i^c = w_i^c - \min (0.20 w_i^c, 10). \label{eq.bonus}
\end{equation}
\item We penalize assignments of too many hall calls to the same elevator car. This is achieved using higher order terms. 
For example, by adding the term $w_{i_1 i_2 ... i_k}^cx_1^c x_2^c ... x_k^c$ in the objective function we increase the waiting time 
by $w_{i_1 i_2 ... i_k}^c$ if $k$ passengers board the same elevator car $c$. 
\end{itemize}
It is important to note that the above changes to the cost function still preserve submodularity. 

We used the following experimental setup to evaluate the different scheduling algorithms.  
We studied:
\begin{itemize}
\item three different buildings with 8, 10, and 12 floors 
\item for each building we consider a 1-hour period of inter-floor traffic scenario with 5 different arrival rates 
specified as percentage of passengers in 5 minutes - 10\%, 15\%, 20\%, 25\% and 30\%
\item for each building we considered 2,~3,~4,~5, and 6 elevator cars. 
\end{itemize}
For each specific building, arrival rate of passengers and fixed number of elevator cars, we measure 
the average waiting time achieved by the scheduling algorithm as the average over 10 different 
instances of the traffic scenario.   

We compare our methods with 
group collective control and estimated time of arrival (ETA) methods. The said methods are known to work well under different traffic conditions and elevator settings. The implementation in Elevate 8 employs several heuristics for improved performance: accounting for future demands, priority for coincident calls, not stopping an elevator car when it is full, intelligent decisions about keeping the door open before getting the hall call, detecting up-peak or down-peak scenarios automatically, etc. However, such heuristics are not included in an explicit objective function and it is hard to determine the impact of some of these improvements. In all our experiments, we consistently outperform these methods in a wide variety of elevator settings. 

We begin by describing an ablation study on our algorithm. Figure~\ref{fg.unary_pairwise_8floor} plots the 
percentage savings in average waiting time when using both unary and pairwise terms in the objective function 
as opposed to using only the unary term in the objective.   For the latter case we simply set $w^c_{ij} = 0$.  
The plot provides the savings at different number of cars and arrival rates of  traffic.  In this study, we did not 
include the bonus specified in Equation~\eqref{eq.bonus}.  Figure~\ref{fg.unary_pairwise_8floor} clearly shows that 
pairwise term is critical to obtaining higher savings in average waiting time across all buildings. The average 
reduction over all scenarios is about $10.9$ \%.

\begin{figure}\centering
\includegraphics[width=\columnwidth,height=5.2cm]{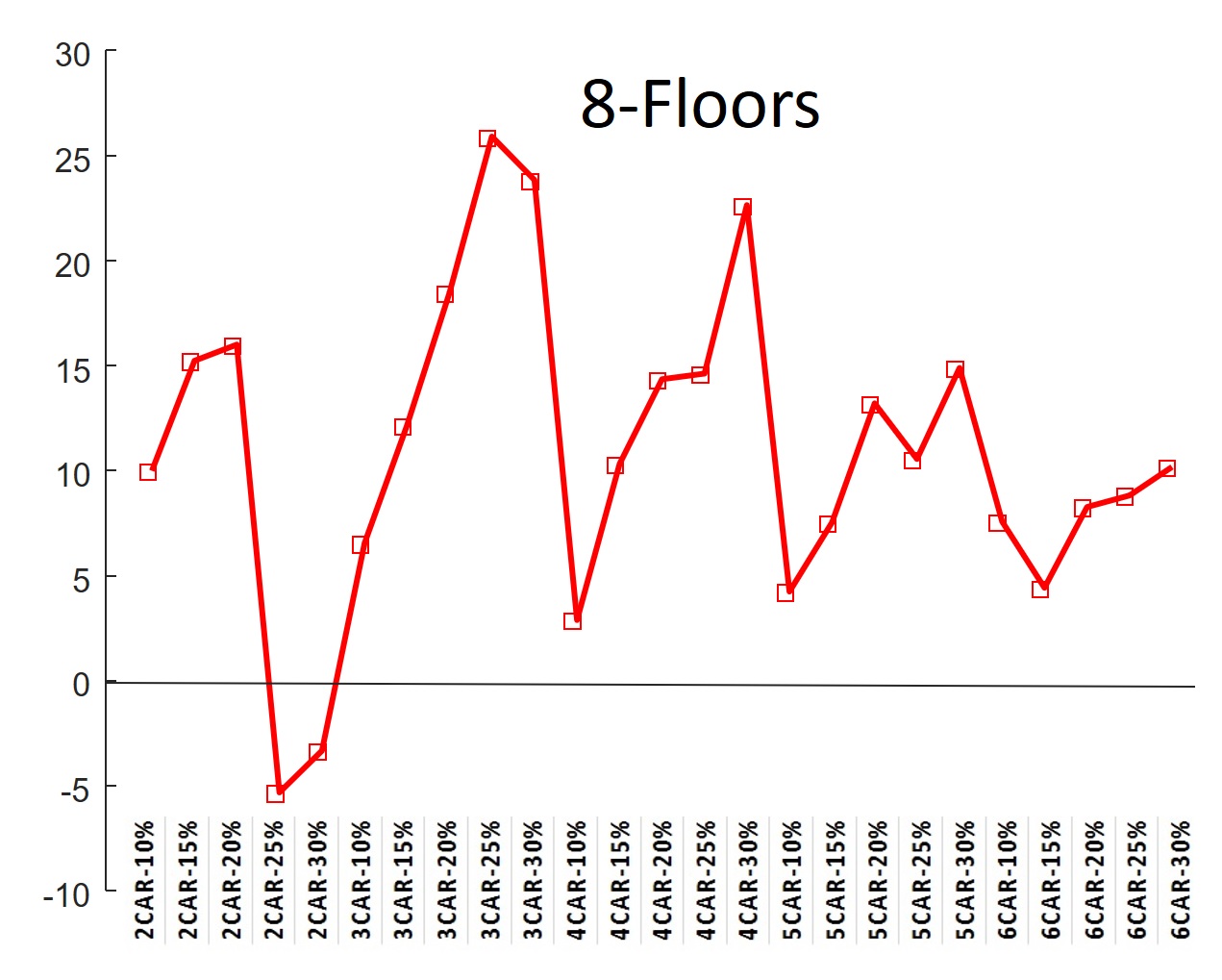}
\caption{\it We show the percentage decrease in the average waiting time using pairwise terms.}
\label{fg.unary_pairwise_8floor}
\end{figure}
\begin{figure}\centering
\includegraphics[width=\columnwidth,height=5.2cm]{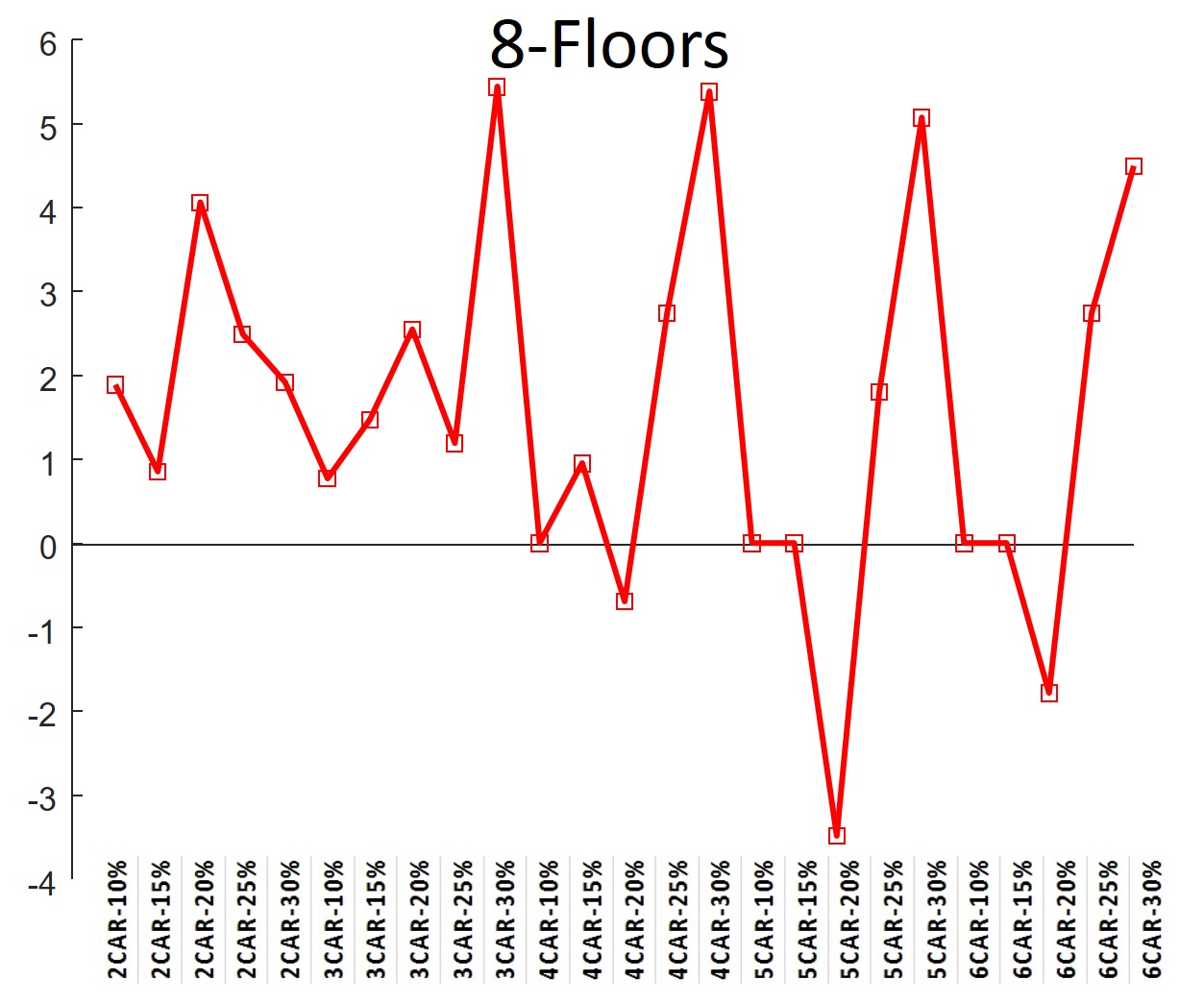}
\caption{\it We show the percentage decrease in the average waiting time using coincident call bonus.}
\label{fg.coincident_call_benefit}
\end{figure}

Figure~\ref{fg.coincident_call_benefit} considers the effect of including the bonus in Equation~\eqref{eq.bonus}.  
The plot shows the reduction in average waiting times for the objective using unary and pairwise terms with 
the bonus from Equation~\eqref{eq.bonus} over the case where the bonus is not included. The average 
reduction over all scenarios is about $1.6$ \%.

We now compare our scheduling algorithms against the ones in Elevate.  
Figure~\ref{fg.group_collective_verses_submodular} plots the reduction in average waiting time over group collective control for different arrival rates and fixed elevators for three different buildings.  
For the case of 8-floors we obtain an average reduction of 8.6 \%, 5.3 \% for 10-floors and 3.9 \% for 
12-floors.


Figure~\ref{fg.eta_verses_submodular} plots the reduction in average waiting time over ETA for different arrival rates and fixed elevators for three different buildings.  
For the case of 8-floors we obtain an average reduction of 4.4 \%, 3.9 \% for 10-floors and 4.2 \% for 12-floors. To illustrate the utility of higher-order terms, we use a small penalty for discouraging the assignment of 4 or 5 passengers to the same car and obtain further reduction in waiting time. For the case of 8-floors we obtain an average reduction of 4.6 \% over ETA as shown in Figure~\ref{fg.horder_eta_verses_submodular}.


\begin{figure}\centering
\includegraphics[width=\columnwidth,height=5.2cm]{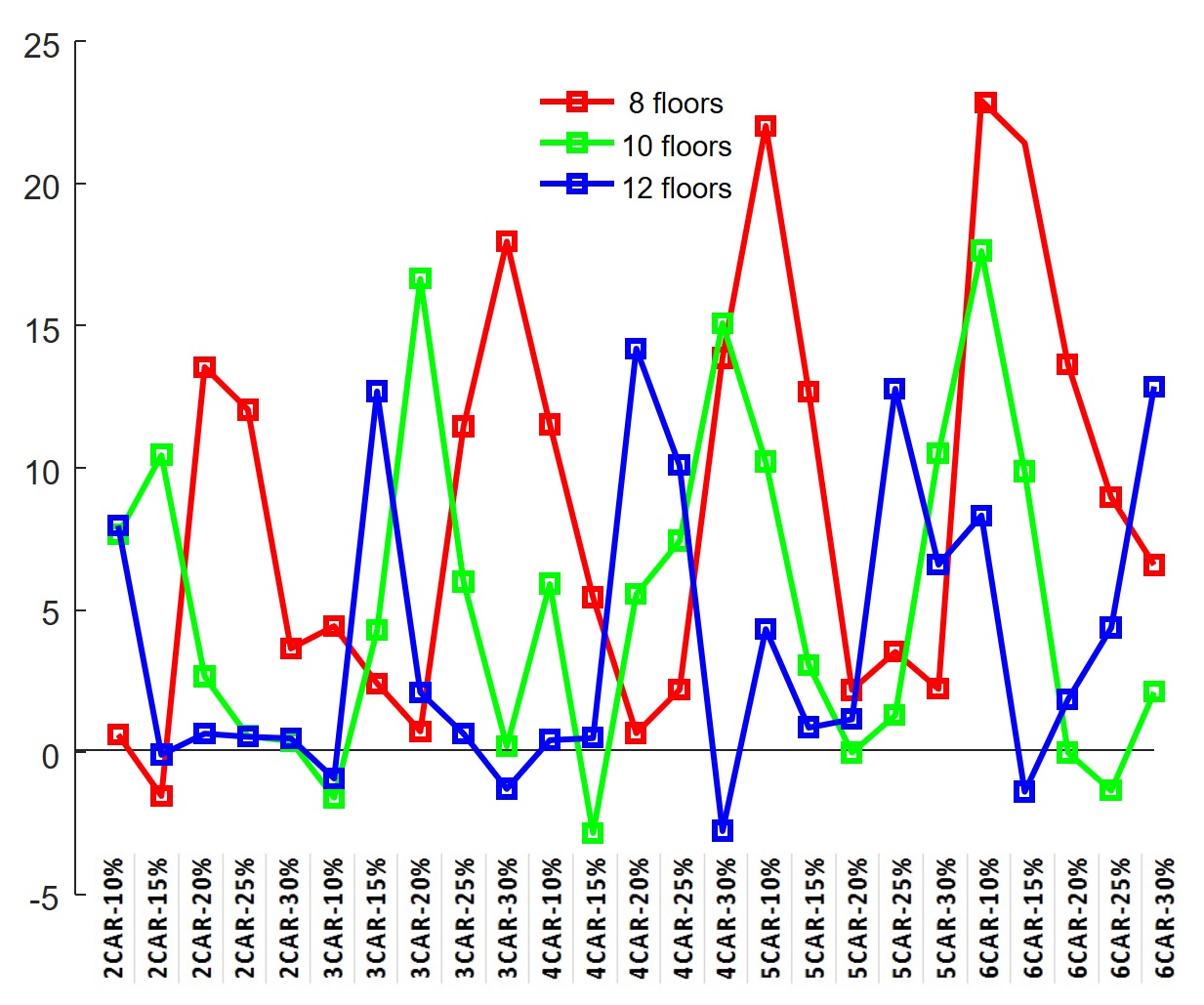}
\caption{\it We show the percentage decrease in the average waiting time of our submodular approach 
with respect to the group collective control.}
\label{fg.group_collective_verses_submodular}
\end{figure}
\begin{figure}[t]\centering
\includegraphics[width=\columnwidth,height=5.2cm]{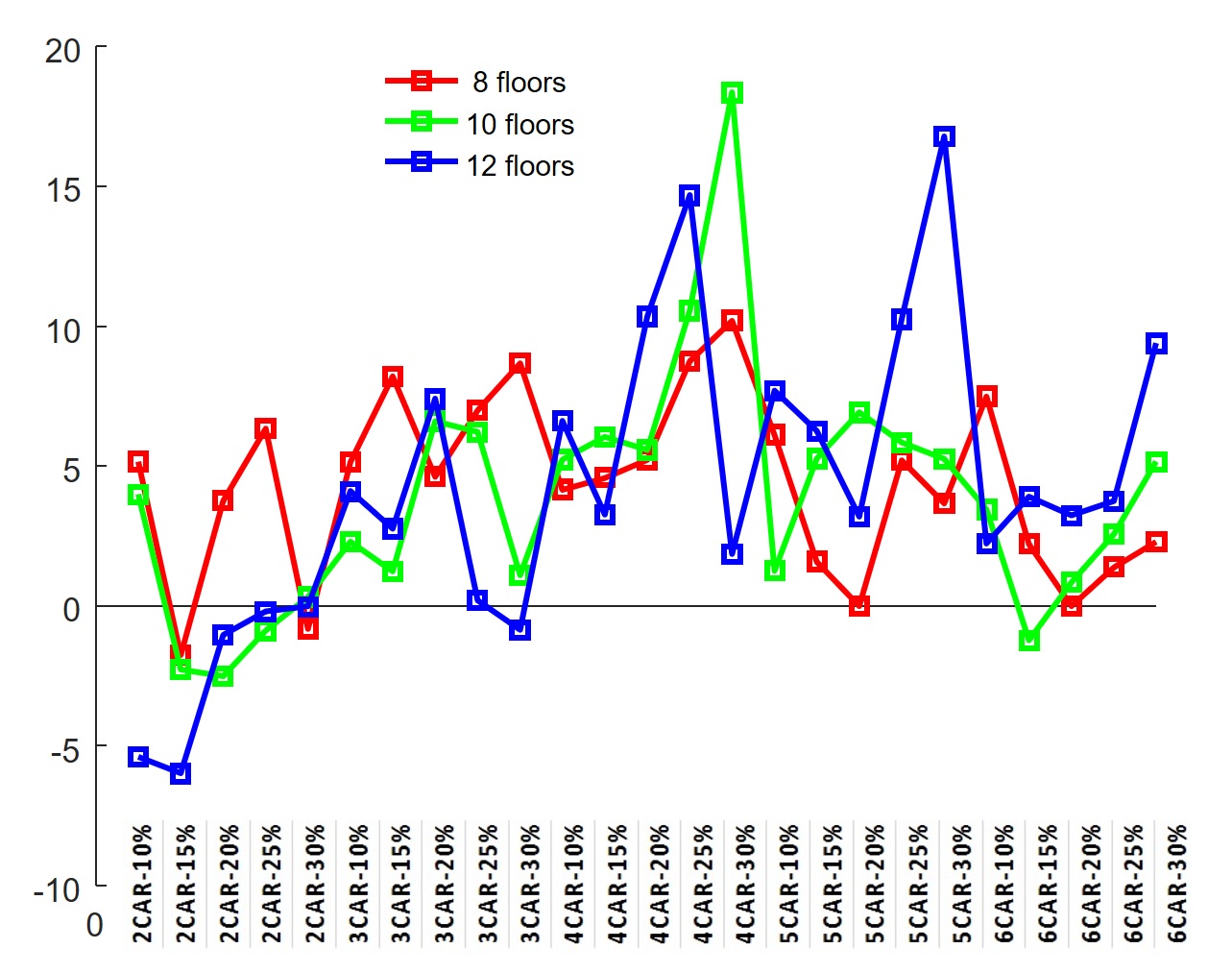}
\caption{\it We show the percentage decrease in the average waiting time of our submodular approach 
with respect to ETA.}
\label{fg.eta_verses_submodular}
\end{figure}
\begin{figure}[t]\centering
\includegraphics[width=\columnwidth,height=5.2cm]{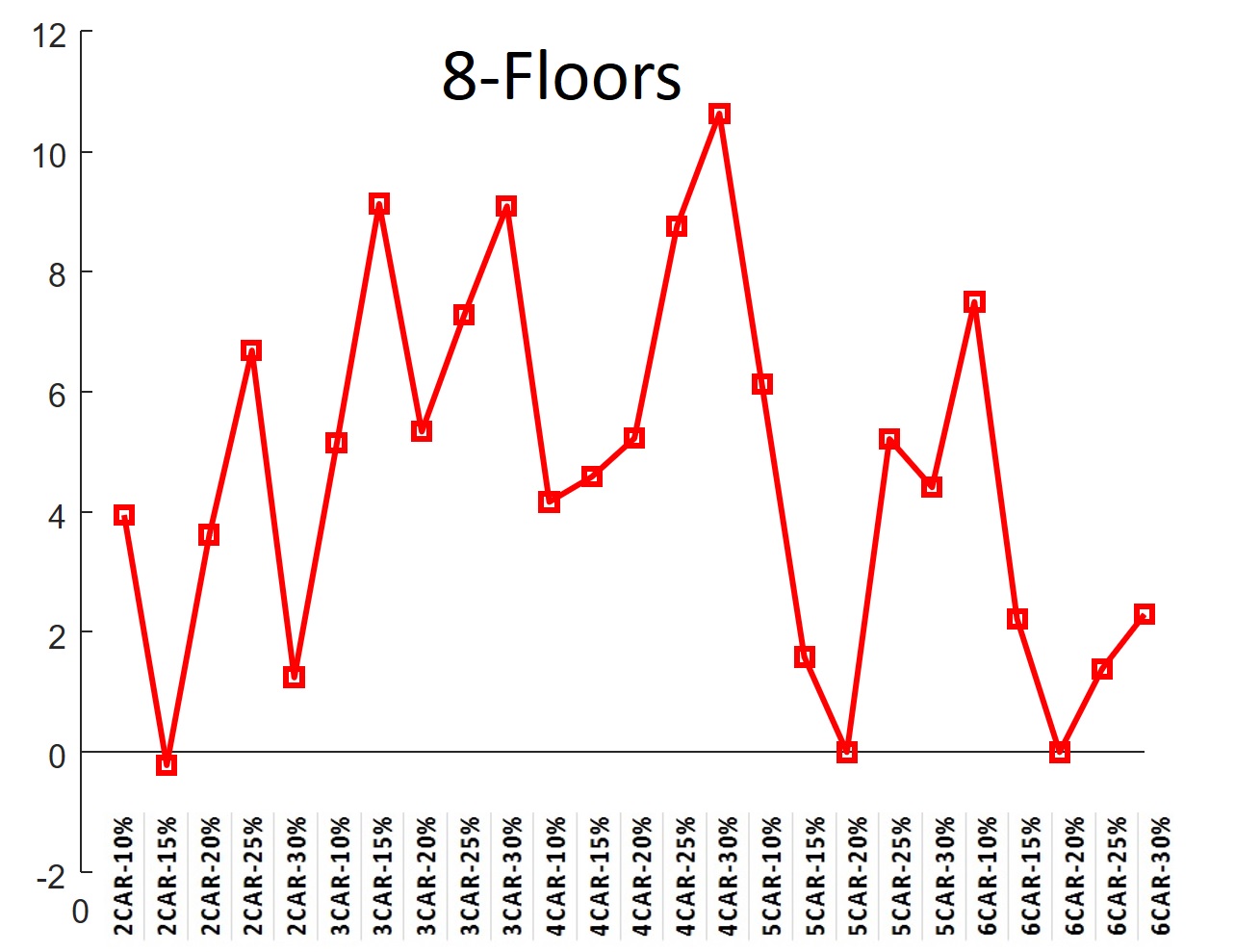}
\caption{\it We show the percentage decrease in the average waiting time using higher order submodular function 
with respect to ETA.}
\label{fg.horder_eta_verses_submodular}
\end{figure}

\section{Discussion}

We show a novel method for solving the group elevator scheduling problem by formulating it as the maximization of submodular 
functions under a matroid constraint. Our method consistently outperforms other industry-standard methods in a wide variety of 
elevator settings. 

In the future, we plan to investigate alternative methods that could directly maximize non-monotonous submodular functions~\cite{Feige2007}. This will allow us to derive improved guarantees on the optimality of the solution. The use of explicit objective function to model several design criteria also opens up the possibility of other integer programming methods for finding globally optimal solutions.  

\bibliographystyle{aaai}
\bibliography{elevSchedule}

\end{document}